\documentclass{article}

\usepackage{microtype}
\usepackage{graphicx}
\usepackage{subcaption}
\usepackage{booktabs} % for professional tables

\usepackage{hyperref}
\usepackage{algorithmic}
\usepackage{nicefrac}
\usepackage[preprint]{icml2026}
\usepackage{amsmath}
\usepackage{amssymb}
\usepackage{mathtools}
\usepackage{amsthm}

\usepackage[capitalize,noabbrev]{cleveref}

\theoremstyle{plain}
\newtheorem{theorem}{Theorem}[section]

\theoremstyle{definition}
\newtheorem{definition}[theorem]{Definition}

\theoremstyle{remark}

\usepackage[textsize=tiny]{todonotes}

\icmltitlerunning{Towards Robust Scaling Laws for Optimizers}

\begin{document}

\twocolumn[
  \icmltitle{Towards Robust Scaling Laws for Optimizers}

  % It is OKAY to include author information, even for blind submissions: the
  % style file will automatically remove it for you unless you've provided
  % the [accepted] option to the icml2026 package.

  % List of affiliations: The first argument should be a (short) identifier you
  % will use later to specify author affiliations Academic affiliations
  % should list Department, University, City, Region, Country Industry
  % affiliations should list Company, City, Region, Country

  % You can specify symbols, otherwise they are numbered in order. Ideally, you
  % should not use this facility. Affiliations will be numbered in order of
  % appearance and this is the preferred way.
  \icmlsetsymbol{equal}{*}

  \begin{icmlauthorlist}
    \icmlauthor{Alexandra Volkova}{ista}
    \icmlauthor{Mher Safaryan}{lancaster}
    \icmlauthor{Christoph H. Lampert}{ista}
    \icmlauthor{Dan Alistarh}{ista,redhat}
    %\icmlauthor{}{sch}
    %\icmlauthor{}{sch}
  \end{icmlauthorlist}

  \icmlaffiliation{ista}{Institute of Science and Technology Austria (ISTA), Klosterneuburg, Austria}
  \icmlaffiliation{redhat}{Red Hat, Raleigh, NC, USA}
  \icmlaffiliation{lancaster}{Lancaster University, Lancaster, UK}

  \icmlcorrespondingauthor{Alexandra Volkova}{avolkova@ist.ac.at}

  % You may provide any keywords that you find helpful for describing your
  % paper; these are used to populate the "keywords" metadata in the PDF but
  % will not be shown in the document
  \icmlkeywords{Machine Learning, ICML}

  \vskip 0.3in
]

\renewcommand{\paragraph}[1]{\vspace{0.1em}\noindent\textbf{#1}}

% this must go after the closing bracket ] following \twocolumn[ ...

% This command actually creates the footnote in the first column listing the
% affiliations and the copyright notice. The command takes one argument, which
% is text to display at the start of the footnote. The \icmlEqualContribution
% command is standard text for equal contribution. Remove it (just {}) if you
% do not need this facility.

% Use ONE of the following lines. DO NOT remove the command.
% If you have no special notice, KEEP empty braces:
\printAffiliationsAndNotice{}  % no special notice (required even if empty)
% Or, if applicable, use the standard equal contribution text:
% \printAffiliationsAndNotice{\icmlEqualContribution}

\begin{abstract}

The quality of Large Language Model (LLM) pretraining depends on multiple factors, including the compute budget and the choice of optimization algorithm. Empirical scaling laws are widely used to predict loss as model size and training data grow, however, almost all existing studies fix the optimizer (typically AdamW). At the same time, a new generation of optimizers (e.g., Muon, Shampoo, SOAP) promises faster and more stable convergence, but their relationship with model and data scaling is not yet well understood. In this work, we study scaling laws across different optimizers. Empirically, we show that 1) separate Chinchilla-style scaling laws for each optimizer are ill-conditioned and have highly correlated parameters. Instead, 2) we propose a more robust law with shared power-law exponents and optimizer-specific rescaling factors, which enable direct comparison between optimizers. Finally, 3) we provide a theoretical analysis of gradient-based methods for the proxy task of a convex quadratic objective, demonstrating that Chinchilla-style scaling laws emerge naturally as a result of loss decomposition into irreducible, approximation, and optimization errors. 
\end{abstract}

\vspace{-1em}
\section{Introduction}

Pretraining Large Language Models (LLMs) requires extreme computational resources, and current trends point toward even larger models and longer training runs. In this light, scaling laws, the empirical  relations between loss, model size, and training data size, have become a key practical tool for planning training runs and allocating resources \citep{kaplan2020scalinglaws,hoffmann_chinchilla}, and recent work has provided a better understanding of these relationships from the theoretical side~\citep{kunstnerbach2025scaling, bach2025ztransform}. 

The most popular form is given by the ``Chinchilla'' scaling laws \cite{hoffmann_chinchilla} representing the loss of a trained model as a decoupled function of number of model parameters $N$ and training items (tokens) $D$:
\begin{equation}
    L = \frac{A}{N^\alpha} + \frac{B}{D^\beta}  + E,
    \label{eq:classic-law}
\end{equation}
where $A$ and $B$ set the term scales, $\alpha, \beta$ are power-law exponents, and $E$ is an irreducible error level. Here, $L$ is a cross-entropy loss in the next token prediction task. This form recognizes two independent sources of loss: finite model capacity $N^{-\alpha}$ and finite data scope $D^{-\beta}$. 

A parallel trend is that, for the last decade, AdamW~\citep{kingma2015adam, Loshchilov2017DecoupledWD} has become the de facto standard choice for training machine learning models, in particular LLMs, because of its tunability, robustness, and solid performance across architectures and data regimes. It is natural that the majority of scaling law studies treat the optimizer as a fixed option, and not as an optimization parameter. Yet, this approach may be insufficient given the emergence of several promising alternative optimizers, some of which are shown to be more effective than AdamW for pretraining~\cite{liu2025muonscalablellmtraining, vlassis2025beyond}.

% motivated by geometric structure of model layers or higher-order optimization. 
Illustrative examples are given by novel optimizers such as i) \textit{Muon}~\cite{muon}, a geometry-aware optimizer that orthogonalizes weight updates, ii) \textit{Scion}~\cite{scion}, an optimizer that extends this approach by explicitly constraining the update norm, iii) \textit{Shampoo}~\citep{shampoo}, a second-order method that preconditions gradients to approximate curvature information, and iv) \textit{SOAP}~\citep{soap}, a further refinement of Shampoo that runs Adam in the eigenbasis of Shampoo’s preconditioner. Each of those optimizers aims to improve upon AdamW, either utilizing geometric structure of model layers or approximating higher-order derivatives. Despite significant interest in this area, the community currently lacks a clear framework of comparing the performance of these optimizers outside of individual empirical studies~\citep{semenov2025benchmarking, wen2025fantasticpretrainingoptimizers}, leading to confusing and even sometimes contradictory performance claims.

\paragraph{Contribution.} In this work, we develop a systematic recipe for comparing optimizers within the scaling law framework, with the goal of understanding how optimizer choice interacts with loss scaling in both model size and data, whether this effect can be captured by a unified law, and what theoretical and practical conclusions follow.

A natural approach, used in prior work \cite{wen2025fantasticpretrainingoptimizers}, is to fit standard scaling laws separately for each optimizer and extrapolate from small-scale runs. We show that this approach is very brittle: the inferred scaling parameters are often poorly identified and can vary substantially under resampling and across runs, even when the predicted loss curves are similar. This motivates the need for stable optimizer-aware scaling laws.

We resolve this by providing a first robust law for optimizer scaling. The key insight is that we propose to fit a set of \emph{shared exponents} on a reference optimizer (usually AdamW), and then re-fit optimizer-specific scaling factors that are both stable and interpretable. On the empirical side, we show that our approach significantly improves parameter variance, and drastically reduces extrapolation error relative to standard fitting.  
On the theoretical side, we build on recent analyses of gradient-based methods on convex quadratics with power-law spectra~\citep{bach2025ztransform}, and show that a Chinchilla-style loss decomposition arises by separating irreducible error, approximation error (finite model width), and optimization error (finite steps).

In summary, our contributions are as follows:
\begin{itemize}
    \item We demonstrate the brittleness of standard independent fits of the classic Chinchilla form when applied to optimizers: the fits are numerically unstable and yield highly-correlated parameters. 
    \item We propose a new unified scaling law with shared parameters across optimizers, but with optimizer-specific rescaling factors that render the approach both more stable and more interpretable, allowing for direct optimizer comparison. 
     \item Theoretical justification: We provide an analytical derivation for the Chinchilla-form law in the case of a classic quadratic model, and prove that the scaling exponents are directly linked to the spectral properties of the problem. 
     
    \item We show empirically that our approach leads to significantly tighter and more stable fits across several popular optimizers (AdamW, Shampoo, Muon, Scion, and SOAP) and across two different architectures and datasets. We also show that the law can be specialized for compute allocation, where we show that the choice of optimizer can be modeled as rescaling the data term $D$ and architecture choice can be viewed as rescaling the number of parameters $N$.    
   
\end{itemize}

\section{Background and Related Work}

\paragraph{Optimizers.} 
Adam~\citep{kingma2015adam} and its variant AdamW~\citep{Loshchilov2017DecoupledWD} have become the standard optimizers in settings such as LLM training. 

Among a new generation of optimizers, Shampoo~\cite{shampoo} and SOAP~\cite{soap} represent quasi-second-order optimization methods that approximate full-matrix preconditioning. Shampoo maintains left and right second-moment gradient statistics per 2D weight matrix, which are then used for the weight update via inverse matrix-root preconditioning.    SOAP  builds on the observation that Shampoo is equivalent to running an Adam-like method in the eigenbasis of Shampoo’s preconditioner. It runs Adam-style moment update in that slowly changing basis, improving stability and convergence speed. 

Muon~\cite{muon} and Scion~\cite{scion} use \textit{orthogonalized} updates aimed at layers containing matrix weights. Each 2D gradient matrix is first used to produce a standard SGD-momentum update. This update is then replaced with the nearest semi-orthogonal matrix, which is efficiently computed via several Newton--Schultz iterations.  Muon uses this matrix for its update, while Scion performs a constrained optimization step via a Frank-Wolfe iteration.

\paragraph{Optimizer Benchmarking.} Earlier benchmarking work by \citet{schmidt-bench} systematically compared optimizers across various tasks and architectures, though not specifically focused on LLM pretraining at scale
More recently, \citet{semenov2025benchmarking} benchmarked 11 optimizers for LLM pretraining in a fixed data setup, across multiple model sizes, token budgets, and batch sizes.
The work by \citet{liu2025muonscalablellmtraining} on benchmarking optimizers  fitted scaling-law-style curves for Muon and AdamW and showed that, under compute-optimal training, Muon achieves up to $2\times$ better end-to-end efficiency (loss vs computation) than AdamW in full-precision LLM pretraining.  

\paragraph{Hyperparameter Scaling.} 
  More broadly, recent work has focused on \emph{hyperparameter transfer across scales}, aiming to predict near-optimal tuning as $N$ and $D$ grow~\cite{kadra2023scaling, muP}. For example,~\citet{wen2025fantasticpretrainingoptimizers} fit scaling laws w.r.t. learning rate, weight decay, etc. as functions of model size and data-to-model ratio for AdamW and a set of modern optimizers. \citet{chen2025HyperparameterTransfer} develop scaling rules for learning rate and weight decay for matrix-preconditioned optimizers and show that optimizer gains can disappear under incorrect hyperparameter scaling. \citet{scalingExponents} study transfer across parameterizations and optimizers, showing that multiple parameterizations can support hyperparameter transfer. 

These hyperparameter-scaling works are complementary to ours. They primarily address the question of how to tune a given optimizer as scale changes, whereas we focus on how optimizer choice shifts the loss–scale relationship itself once hyperparameters are appropriately tuned.

\citet{liu2025muonscalablellmtraining} compare Muon and AdamW optimizers via \emph{compute-only} scaling, fitting $L = A C^{-\beta}$ where $C$ denotes total training compute (FLOPs). This provides a concise summary of optimizer improvements at fixed compute, but it collapses the two fundamental scaling axes (model size $N$ and data $D$) into one. In contrast, our analysis keeps the $(N,D)$ structure and shows that optimizer effects are well-modeled as rescalings of effective $N$ and $D$, which yields a more interpretable separation into parameter- and data-efficiency factors.

Moreover,~\citet{wen2025fantasticpretrainingoptimizers} were interested in improved data efficiency of optimizers relative to AdamW, and fitted the law of the form:
\begin{equation}
    \begin{aligned}
        L_{opt} &= B D_{opt}^{-\beta} + E = B \left(\frac{D_{AdamW}}{s_{opt}}\right)^{-\beta}  + E, \\ & \text{ where }\; s_{opt} = \frac{D_{AdamW}}{D_{opt}}.
\end{aligned}
\label{eq:wen_eq}
\end{equation}
While useful, their law form does not take model scaling into account. In fact, they notice that data speedup slows down as model size grows, but it cannot be reflected in the law form of Eq.~\ref{eq:wen_eq}.

\section{Experimental Setup}

We begin by describing the experimental setup used to validate scaling laws. 
Specifically, we consider pretraining models from two different LLM families (Llama~\citep{touvron2023llama2openfoundation} and Olmo~\citep{olmo2}) across different datasets, spanning a range of model sizes and token-to-parameter ratios.

\paragraph{Models.} We train a set of two decoder-only LLMs from two different architecture families: OLMo-2~\cite{olmo2} and classic LLaMa-style models~\cite{touvron2023llama2openfoundation}. For both families, model parameters range from 50M to 1.5B, covering small and medium scale regimes. Model structure is detailed in Table~\ref{tab:architectures}. Model configuration details are provided in Appendix~\ref{apx:model-configs}. 

\begin{table}[h]
    \centering
        \caption{Architectural characteristics of model families used.}
    \label{tab:architectures}

    \begin{tabular}{lcc}
    \toprule
        & OLMo & LLaMa \\
        \midrule
        Norm & Post- & Pre- \\
        Attention & GQA, ratio$ \;=3$ & MHA \\
        Activations & ReLU$^2$ & SwiGLU \\
        QK norm & Yes & No \\
        Tied embeddings & Yes & Yes \\
        \bottomrule
    \end{tabular}
\end{table}

\paragraph{Datasets.} As training data, we use i) the ClimbMix dataset~\cite{climb_dataset} for the OLMo family models and ii) the FineWeb dataset~\cite{fineweb} for the LLaMa family. We do so to demonstrate independence of our results from the dataset and architecture used. For each model size $N$ we train for a set of token-to-parameter ratios $D/N = \{30, 50, 100, 200\}$. 
As such, we go significantly beyond the standard ``Chinchilla-optimal'' regime ($D/N = 20$), as we are interested in long-tail behavior.

\paragraph{Training setup.} We evaluate five optimizers: AdamW~\cite{kingma2015adam, Loshchilov2017DecoupledWD}, Muon~\cite{muon}, Scion~\cite{scion}, Shampoo~\cite{shampoo}, and SOAP~\cite{soap}.  We use a batch size of 512 with a context length of 1024 tokens. Our training uses Warmup-Stable-Decay (WSD)~\cite{hu2024minicpm}, allocating 5\% of total epochs to the warmup phase and 20\% to the decay phase. We sweep learning rate and all other optimizer hyperparameters ($\beta, \varepsilon,$ etc.) using the 50M model, picking the ones with the best loss on the \textit{validation} set. Later, we reuse them assuming that these parameters transfer across model sizes, and we additionally sweep over learning rates for each model size. We perform all sweeps for each architecture independently.

The experiments were run on 8$\times$H100 and 8$\times$B200 GPU nodes. Overall, we conducted over 250 training runs.

\paragraph{Scaling laws fitting procedure.} Each training run produces a data point $(N, D, L)$ for the scaling law fit, where $L$ is a loss on the \textit{test} set. Here $N$ is the total parameter count including embeddings, and $D$ is the number of training tokens. To estimate the hyperparameters of the law, we use least squares optimization from \texttt{scipy.optimize}~\cite{2020SciPy-NMeth} with the robust Huber loss ($\delta = 10^{-3}$). For numerical stability, we fit data points in the log-scale, $(\log N, \log D, \log L)$.

\begin{table*}[t]
    \centering
    \caption{Scaling law coefficients across optimizers. Coefficient errors, train and test errors are computed via leave-one-out cross-validation.}
    \label{tab:independent-sc-law}

    \begin{tabular}{c c c c c c c c}
    \toprule
    Optimizer & A & $\alpha$ & B & $\beta$ & E & Train fit error & Test fit error\\
    \midrule
AdamW & $4966_{\pm 4612}$ &$0.49_{\pm 0.03}$ &$1084_{\pm 453}$ &$0.38_{\pm 0.04}$ &$2.11_{\pm 0.03}$ &$1.3 \cdot 10^{-5}$ &$1.53 \cdot 10^{-5}$\\
Muon & $1148_{\pm 133}$ &$0.40_{\pm 0.01}$ &$110_{\pm 80}$ &$0.27_{\pm 0.03}$ &$1.90_{\pm 0.01}$ &$1.14 \cdot 10^{-5}$ &$1.16 \cdot 10^{-5}$\\
Scion & $449_{\pm 541}$ &$0.34_{\pm 0.03}$ &$58_{\pm 28}$ &$0.24_{\pm 0.05}$ &$1.80_{\pm 0.11}$ &$1.22 \cdot 10^{-5}$ &$1.36 \cdot 10^{-5}$\\
Shampoo & $3580_{\pm 4024}$ &$0.46_{\pm 0.05}$ &$96_{\pm 973}$ &$0.27_{\pm 0.08}$ &$2.00_{\pm 0.07}$ &$1.38 \cdot 10^{-5}$ &$1.95 \cdot 10^{-5}$\\
SOAP & $2175_{\pm 4723}$ &$0.44_{\pm 0.05}$ &$75_{\pm 83}$ &$0.26_{\pm 0.08}$ &$2.00_{\pm 0.23}$ &$1.29 \cdot 10^{-5}$ &$1.67 \cdot 10^{-5}$\\
\bottomrule
\end{tabular}
\end{table*}

\begin{figure*}[t]
  \centering
  \begin{subfigure}[b]{0.49\textwidth}
    \centering
    \includegraphics[width=\textwidth]{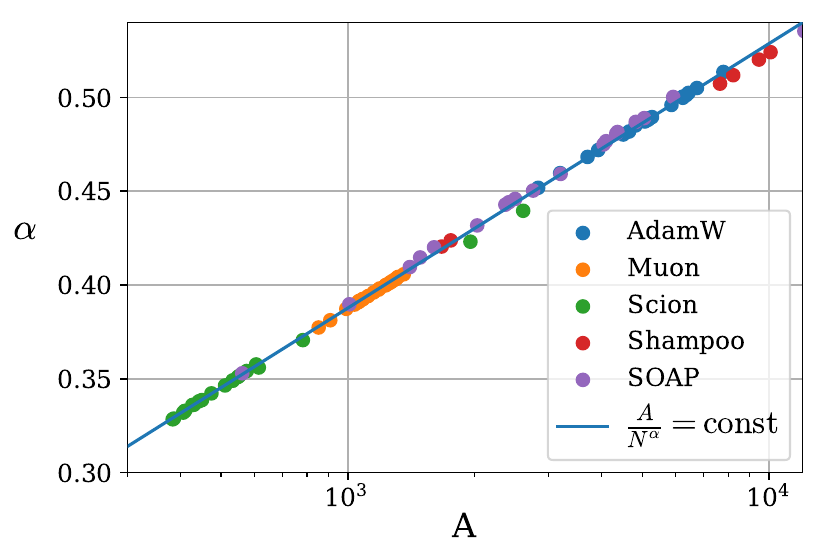}
    % \caption{Caption 1}
    \label{fig:error-correlation-A}
  \end{subfigure}
  \hfill
  \begin{subfigure}[b]{0.49\textwidth}
    \centering
    \includegraphics[width=\textwidth]{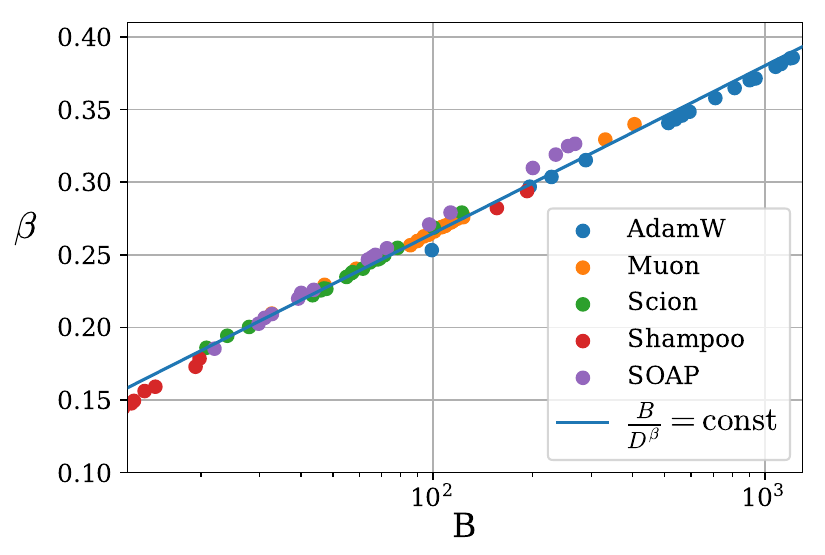}
    % \caption{Caption 2}
    \label{fig:error-correlation-B}
  \end{subfigure}
  \vspace{-1em}
  \caption{Correlation between estimated hyperparameters $A, \alpha$ and $B, \beta$ for leave-one-out cross-validation.}
  \label{fig:error-correlation}
\end{figure*}

\section{The Direct Approach: Fitting Independent Per-Optimizer Scaling Laws}
\label{sec:naive-fitting}

A natural approach to optimizer parametrization in the context of scaling laws~\citep{wen2025fantasticpretrainingoptimizers},  is to fit a separate Chinchilla-style scaling law for each optimizer based on small-scale empirical data, determine the corresponding parameters, and then extrapolate performance. However, we show that this approach is not robust in practice, and can yield misleading results. Specifically, when fitting Eq.~\ref{eq:classic-law} independently per optimizer, the estimated pairs $(A, \alpha)$ and $(B, \beta)$ exhibit a strong correlation despite producing similar predicted curves.

\paragraph{Fitting Procedure.} 
The direct approach used in prior work, is to simply fit scaling laws of the form given in Eq.~\ref{eq:classic-law} across all model sizes and data budgets, independently for each optimizer, as described in Algorithm~\ref{alg:per_optim_fit}. An instance of the results of such a fit is given in Table~\ref{tab:independent-sc-law}. 

\begin{algorithm}[t]
\caption{Naive per-optimizer Scaling Law (independent Chinchilla fits)}
\label{alg:per_optim_fit}
\begin{algorithmic}[1]
\REQUIRE Runs $\mathcal{R}$ for optimizers $\mathcal{O}$; each $r\in\mathcal{R}$ has $(o,N,D,L)$ with $o\in\mathcal{O}$.\\
 Scaling law $\hat L(N,D;\theta)= A N^{-\alpha} + B D^{-\beta} + E$, with $\theta=(A,\alpha,B,\beta,E)$. \\ Nonlinear least-squares solver \textsc{Fit}$(\cdot)$ with bounds.

\ENSURE Fitted parameters $\{\hat\theta_o\}$.

\FOR{optimizer $o \in \mathcal{O}$}
  \STATE Prepare the data\\ $X \leftarrow \{(\log N(r), \log D(r)) :o(r) = o\}$,\\
         $y \leftarrow \{\log L(r) :  o(r) = o\}$
  \STATE Choose initialization $\theta_{init}$ and bounds \{$\theta_{b}$\}
  \STATE $\hat\theta_o \leftarrow \textsc{Fit}\!\left(
      \underset{\theta}{\min} \underset{i}{\sum} \big(y_i - L(X_i,\theta)\big)^2,
      \theta_{init}, \theta_{b}   \right)$
\ENDFOR
\STATE \textbf{return} $\{\hat\theta_o\}$ 
\end{algorithmic}
\end{algorithm}

\paragraph{Lack of Robustness.} 
Examining the results and the quality of fit, it is easy to note the high variability of $A$ and $B$ parameters. A simple way to assess  robustness is to perform \textit{leave-one-out cross-validation}: for each training configuration in turn, we remove it from the dataset and refit Eq.~\ref{eq:classic-law} on the remaining points, as described in Algorithm~\ref{alg:loo_cv}. This allows us to evaluate the prediction error of the refitted law on the excluded data point and yields standard deviations for each estimated parameter; error intervals represent $\pm$ one standard deviation. We report all values in Table~\ref{tab:independent-sc-law}.

The resulting error intervals are significant; however, we found the parameters $(A, \alpha)$ and $(B, \beta)$ to be highly correlated and, moreover, lying on curves $ \frac{A}{N^{\alpha}} = \text{const}$ and  $ \frac{B}{D^{\beta}} =  \text{const}$ (Figure~\ref{fig:error-correlation}). This indicates that $(A, \alpha)$ and $(B, \beta)$ cannot be treated as independent hyperparameters because of the law form and the small range of $N$ and $D$ values, the law parameters are implicitly confined by $ \frac{A}{N^{\alpha}} =  \text{const}$ and $ \frac{B}{D^{\beta}} =  \text{const}$. In other words, various parameter sets $(A, \alpha, B, \beta, E)$ explain the same training points almost equally well because the parameters trade off against each other. 

\paragraph{Discussion.} On a higher level, this observation means that scaling laws of the form Eq.~\ref{eq:classic-law} fitted independently to each optimizer are intrinsically unstable, as small changes in the data points can lead to large, compensating pair-wise shifts in $(A, \alpha)$ and $(B, \beta)$, even though the combined effect on the fitted curve remains similar.  As a result, comparing optimizers via these fitted parameters (for example, comparing exponents $\alpha, \beta$) is not robust: the results can change significantly under small perturbations of the data. In turn, this reduces the predictive power of the approach.

\begin{algorithm}[t]
\caption{Leave-one-out (LOO) cross-validation for Scaling Law fitting}
\label{alg:loo_cv}
\begin{algorithmic}[1]
\REQUIRE  Runs $\mathcal{R}$ with optimizers $\mathcal{O}$; each run $r$ has $(o,N,D,L)$, $o \in \mathcal{O}$. \\
Scaling law fitting algorithm \textsc{Alg}$(\cdot): \mathcal{R} \rightarrow \theta$.
\ENSURE LOO parameter errors $\{\Delta \theta_o\}$.

\FOR{each optimizer $o \in \mathcal{O}$}
    \STATE $\mathcal{R}_o \leftarrow \{r\in\mathcal{R} : o(r)=o\}$, \quad $m \leftarrow |\mathcal{R}_o|$
    \FOR{$i = 1 \; \dots \; m$}
        \STATE Train set: $\mathcal{R}_o^{(-i)} \leftarrow \mathcal{R}_o \setminus \{r_i\}$
        \STATE $\hat\theta^{(-i)} \leftarrow \textsc{Alg}\!\left(\mathcal{R}_o^{(-i)}\right)$
        % \STATE Compute error on test:\\ 
        % $e = (L_i - L(N_i, D_i; )$
        % \STATE $\hat L_i^{(-i)} \leftarrow \hat L\!\left(N(r_i),D(r_i);\hat\theta_o^{(-i)}\right)$
        % \STATE $e_i \leftarrow L(r_i) - \hat L_i^{(-i)}$
        \STATE Store $\hat\theta_o^{(-i)}$
    \ENDFOR
    % \STATE $\mathrm{MSE}_o \leftarrow \frac{1}{m}\sum_{i=1}^{m} e_i^2$
    \STATE Compute parameter std: \\
    $\hat\theta_o = \frac{1}{m} \underset{i}{\sum} \theta^{(-i)}$\\
    $\Delta \theta_o = \sqrt{\frac{1}{m} \underset{i}{\sum} (\hat\theta_o - \theta^{(-i)})^2  }  $
\ENDFOR
\STATE \textbf{return} $\{ \Delta \theta\}_o$
\end{algorithmic}
\end{algorithm}

\begin{algorithm}[t]
\caption{Shared-Parameter Scaling Law (ours)}
\label{alg:rho_fit}
\begin{algorithmic}[1]
\REQUIRE Runs $\mathcal{R}$ with optimizers $\mathcal{O}$; each $r\in\mathcal{R}$ has $(o,N,D,L)$ with $o\in\mathcal{O}$.\\
 Scaling law $\hat L(N,D;\theta)= A (\rho_N \cdot N)^{-\alpha} + B (\rho_D \cdot D)^{-\beta} + E$, with parameters $\theta=(A,\alpha,B,\beta,E)$ for the reference optimizer and $\theta_o = \{\rho_N, \rho_D\}_o$ for other optimizers. Nonlinear least-squares solver \textsc{Fit}$(\cdot)$.

\ENSURE Fitted parameters $\{\hat\theta_o\}$.

\FOR{the reference optimizer $o = o_\ast$}
  \STATE Prepare the data\\ $X_o \leftarrow \{(\log N(r), \log D(r)) :  o(r) = o\}$,\\
         $y_o \leftarrow \{\log L(r) : o(r) = o\}$
  \STATE Choose init. $\theta_{init} = (A, \alpha, B, \beta, E)_{init}$ and bounds \{$\theta_{bound}$\}
  \STATE $\hat\theta_\ast \leftarrow \textsc{Fit}\!\left(
      \underset{\theta}{\min} \underset{i}{\sum} \big(y_i - L(X_i;\theta)\big)^2,
      \theta_{init}, \theta_{b}   \right)$
\ENDFOR

\FOR{other optimizers $o \in \mathcal{O} \setminus \{o_\ast \}$}
  \STATE Prepare the data\\ $X_o \leftarrow \{(\log N(r), \log D(r)) :  o(r) = o\}$,\\
         $y_o \leftarrow \{\log L(r) : o(r) = o\}$
  \STATE Choose init. $\theta_{init}= (\rho_N, \rho_D)_{init}$ and bounds \{$\theta_{b}$\}
  \STATE $\hat\theta_o \leftarrow \textsc{Fit}\!\left(
      \underset{\theta}{\min} \underset{i}{\sum} \big(y_i - L(X_i;\theta_\ast, \theta)\big)^2,
      \theta_{init}, \theta_{b}   \right)$
\ENDFOR
\STATE \textbf{return} $\{\hat\theta_o\}$ 
\end{algorithmic}
\end{algorithm}

\section{Method: A Shared-Parameter Scaling Law}
\label{sec:our-method}
\subsection{Unified Scaling Law and Fitting Procedure} 

Motivated by observations in Sec.~\ref{sec:naive-fitting}, we propose a more stable law parametrization that \textit{treats the scaling exponents as shared}, while allowing the rescaling factors to diverge in optimizer-specific fashion. Concretely, we introduce optimizer-specific multiplicative factors $\rho_N$, $\rho_D$ that are interpretable as parameter-efficiency and data-efficiency factors relative to a reference optimizer, chosen naturally as AdamW. 
This leads to a unified optimizer scaling law: 
\begin{equation}
        L = \frac{A}{(N \cdot \textcolor{blue}{\rho_N})^\alpha} + \frac{B}{(D \cdot \textcolor{blue}{\rho_D}) ^\beta}  + E,
        \label{eq:law-with-rhos}
\end{equation}
where the parameters $A, \alpha,B,\beta, E$ are \textit{shared} across all optimizers for a given training instance, but the optimizer scaling parameters $\rho_N, \rho_D$ are \textit{unique} to each optimizer. By fixing the shared exponents, we avoid the parameter co-dependency issue ($A,\alpha$) and ($B, \beta$) identified in Section~\ref{sec:naive-fitting}, reducing the degrees of freedom per optimizer from 5 to just 2. Kumar et al. (\citeyear{kumar2025scaling}) proposed a law with a similar multiplicative factor $\rho_N$ that corresponds to performance degradation under low-precision training.

Here, $\rho_N$ indicates the optimizer behaves as if training a larger 
model (parameter efficiency), while $\rho_D$ indicates faster convergence 
per token (data efficiency). Our formulation allows a natural Pareto front interpretation: if optimizer $O_1$ has both higher data and parameter efficiencies relative to optimizer $O_2$, then it is clearly superior. However, it also allows for trade-offs, where $O_1$ has higher $\rho_N$ but lower $\rho_D$ relative to $O_2$, defying straightforward comparison.

This formulation is not unique without a reference point: one cannot fit $(A, \alpha, \rho_N)$ and $(B, \beta, \rho_D)$ at the same time due to their dependency. There are a few ways to select a reference point, and we choose to fit shared parameters $(A, \alpha, B, \beta, E)$ for the classic baseline optimizer, which is Adam/AdamW. Holding these parameters fixed, we then fit only $(\rho_N, \rho_D)$ per optimizer and obtain parameter and data efficiencies \textit{relative to AdamW}. This procedure is described in Algorithm~\ref{alg:rho_fit}. This allows us to mitigate both the problem of dependent parameters $(A, \alpha)$, $(B, \beta)$ and the ``overfitting'' problem, as described in Section~\ref{sec:naive-fitting}.

In Section~\ref{sec:theory}, we provide a theoretical analysis of gradient descent for the proxy task of a convex quadratic, decomposing the excess loss into approximation and optimization terms akin to \eqref{eq:law-with-rhos}. We prove that both terms scale as power laws with exponents determined by the spectral properties of the underlying problem.

\subsection{Experimental Validation}

In Table~\ref{tab:olmo-rhos}, we first report the fit coefficients, obtained using Algorithm~\ref{alg:rho_fit}, for OLMo family models. 
We repeat the same set of experiments on LLaMa model family. We report these results in Table~\ref{tab:llama-rhos}; both show  similar values and trends, and a good quality-of-fit.  

\begin{table}[h]
    \centering
        \caption{Fitted coefficients $\rho_N,\rho_D$  per optimizer for OLMo-2 family models.}
    \label{tab:olmo-rhos}

    \begin{tabular}{c c c c }
    \toprule
    Optimizer & $\rho_N$ & $\rho_D$ & Fit error\\
\midrule
AdamW & $1.00$ & $1.00$ & $1.53 \cdot 10^{-5}$ \\
Muon & $0.96_{\pm 0.01}$ & $2.08_{\pm 0.08}$ & $3.13 \cdot 10^{-4}$ \\
Scion & $0.95_{\pm 0.01}$ & $1.99_{\pm 0.13}$ & $3.00 \cdot 10^{-4}$ \\
Shampoo & $0.95_{\pm 0.06}$ & $1.55_{\pm 0.48}$ & $3.06 \cdot 10^{-4}$ \\
SOAP & $0.95_{\pm 0.02}$ & $2.57_{\pm 0.25}$ & $2.80 \cdot 10^{-4}$ \\

\bottomrule
    \end{tabular}
\end{table}

\begin{table}[h]
    \centering
    \caption{Fitted coefficients $\rho_N,\rho_D$  per optimizer for LLaMa family models.  }
\label{tab:llama-rhos}
\begin{tabular}{lccc}
\toprule
Optimizer & $\rho_N$ & $\rho_D$ & Fit error \\
\midrule
AdamW & $1.00$ & $1.00$ & $1.53 \cdot 10^{-5}$ \\
Muon & $1.02_{\pm 0.01}$ & $1.41_{\pm 0.08}$ & $7.11 \cdot 10^{-5}$ \\
Scion & $0.95_{\pm 0.01}$ & $1.73_{\pm 0.09}$ & $3.92 \cdot 10^{-5}$ \\
Shampoo & $0.99_{\pm 0.07}$ & $1.16_{\pm 0.13}$ & $7.16 \cdot 10^{-5}$ \\
SOAP & $0.98_{\pm 0.01}$ & $1.75_{\pm 0.13}$ & $5.60 \cdot 10^{-5}$ \\
\bottomrule
\end{tabular}
\end{table}

\paragraph{Discussion.} The coefficients $(\rho_N, \rho_D)$ provide an interpretation of optimizer effects in the scaling laws framework. By construction, $\rho_N$ acts as a parameter-efficiency factor: at a fixed token budget $D$, an optimizer with $\rho_N > 1$ behaves as if ``boosting'' the size of the model, as we are effectively training a model of size $\rho_N \cdot N$. Similarly, $\rho_D$ acts as \textit{data efficiency}: for a given data size $D$ an optimizer with $\rho_D > 1$ will achieve better performance than AdamW. This interpretation makes optimizers comparisons transparent.

If an optimizer has both $\rho_N$ and $\rho_D$ larger than those of a reference, it is strictly better; it achieves better loss along both the model size and data axis. More commonly, optimizers exhibit a trade-off, as can be seen in majority of our experiments, an optimizer increases data efficiency but at a cost of parameter efficiency. In such cases, there is no single ``best'' optimizer because the choice depends on whether a training run is constrained primarily by token budget or model size/memory. 

As $\rho_N$ values stay consistent around $1$ across optimizers and architectures, in both experiments, it appears that the optimizer choice impacts primarily the data efficiency factor $\rho_D$. We emphasize that $\rho_N$ and $\rho_D$ should be viewed only within the fitted $(N, D)$ range, rather than claiming a universal property of the optimizer. 
Yet, our results seem to consistently suggest that modern optimizers are on par or even slightly less parameter-efficient than AdamW, but amply compensate from the point of view of \textit{better data efficiency}. Intuitively, this correlates well with their use of approximate second-order information, which improves their convergence relative to the number of samples.

\begin{figure}
    \centering
    \includegraphics[width=0.95\linewidth]{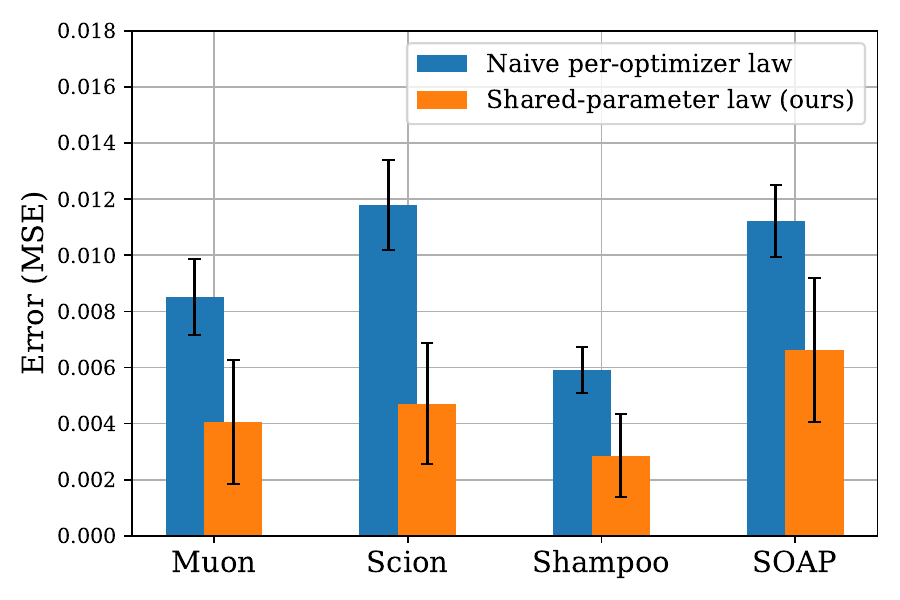}
    \caption{Prediction error (MSE; lower is better) across optimizers for two scaling-law parameterizations: independent per-optimizer Chinchilla fits (“Naive per-optimizer law”) versus our shared-exponent law with optimizer-specific rescaling factors (“Shared-parameter law”). Our approach more than halves the error.}
    \label{fig:pred-power-errors}
\end{figure}

\paragraph{Extrapolation Error Analysis.} To evaluate prediction power in the way scaling laws are typically used in practice, we perform an extrapolation test: we fit each scaling-law variant using only runs with model sizes below 1B parameters and then predict the losses of a larger 1.5B model. The resulting loss errors for the naive approach (Sec.~\ref{sec:naive-fitting}) and the proposed method are reported in Figure~\ref{fig:pred-power-errors}. Across all optimizers, our proposed shared-parameter law improves substantially (by more than 2x) in terms of extrapolation error, confirming its robustness. 

\subsection{Computational Form of the Law}

Optimizers also differ substantially in terms of computational cost: their update rules imply different memory usage and per-step overheads. In particular, preconditioning-based methods such as Shampoo and SOAP require maintaining and applying structured second-moment preconditioners, which typically increases memory use and makes each training step slower than simpler first-order methods such as AdamW or Muon/Scion. For this reason, practitioners are often more interested in a scaling law that uses total compute instead of the amount of training data. We therefore consider an alternative parameterization where the data axis $D$ is replaced by the computational cost $C$:
\begin{equation}
    L = \frac{A}{(N \cdot \rho_N)^\alpha} + \frac{B}{(C \cdot \rho_C)^\beta} + E.
    \label{eq:compute-law}
\end{equation}

Depending on the setting, compute $C$ can be measured as FLOPs, GPU-hours, wall-clock time, energy in kWh, etc. The ($\rho_N, \rho_C$) parameters have a similar interpretation as in the previous subsection: $\rho_N$ and $\rho_C$ capture the trade-off between model size and compute relative to a reference optimizer. 
In our experiments, we measure compute as wall-clock runtime in seconds of each training run on an 8$\times$H100 node. We then fit the same shared-parameter law form, and obtain optimizer-specific efficiency factors $(\rho_N, \rho_C)$, which we report in Table~\ref{tab:compute-rhos}.   
\begin{table}[t]
    \centering
    \caption{Fitted coefficients $\rho_N,\rho_C$  per optimizer for OLMo family models.  }
\label{tab:compute-rhos}
\begin{tabular}{lccc}
\toprule
Optimizer & $\rho_N$ & $\rho_C$ & Fit error \\
\midrule
AdamW & $1.00$ & $1.00$ & $1.01 \cdot 10^{-5}$ \\
Muon & $1.22$ & $1.01$ & $1.97 \cdot 10^{-5}$ \\
Scion & $1.04$ & $1.26$ & $7.89 \cdot 10^{-6}$ \\
Shampoo & $0.97$ & $1.11$ & $2.85 \cdot 10^{-5}$ \\
SOAP & $0.92$ & $1.44$ & $2.26 \cdot 10^{-5}$ \\
\bottomrule
\end{tabular}
\end{table}

We note that the resulting  $(\rho_N, \rho_C)$ factors are heavily dependent on implementation details (hardware utilization, communication, etc.) and may change with different hardware or pipeline optimizations. Nevertheless, we can observe that, in our setup, the Scion optimizer strictly dominates Shampoo, as it has both higher parameter efficiency and higher computational efficiency (due to one-sided preconditioning, which reduces its runtime significantly). 

\section{Theoretical Analysis}
%========= Theory goes here ======

% \subsection{Theoretical Justification}
\label{sec:theory}
In order to provide justification for the proposed form of the law (Eq.~\ref{eq:law-with-rhos}) and, in general, for the Chinchilla law form (Eq.~\ref{eq:classic-law}) itself, we build on top of the previous work \cite{bach2025ztransform, kunstnerbach2025scaling}. There, the authors have provided the exact exponents for infinitely-dimensional quadratic objectives with general spectrum for gradient descent and accelerated gradient descent, as well as extensions to heavy-ball momentum and stochastic gradient descent with a power-law spectrum. 

{\bf Problem Setup.} Following \citet{bach2025ztransform}, we consider minimization of the quadratic loss $L(\theta) = L^* + \frac{1}{2}\langle \theta-\theta^*, H(\theta-\theta^*) \rangle$ for some compact positive semidefinite operator $H : {\cal H} \to {\cal H}$ on a separable Hilbert space of infinite dimension. Applying Gradient Descent (GD) with some step-size $\gamma>0$ to the quadratic objective leads iterations
\begin{align*}
\theta_k
&= \theta_{k-1} - \gamma\nabla L(\theta_{k-1})
= \theta_{k-1} - \gamma H(\theta_{k-1}-\theta^*) \\
&= \theta^* + (I-\gamma H)(\theta_{k-1} - \theta^*)
= \theta^* + (I-\gamma H)^k\delta,
\end{align*}
where $\delta = \theta_{0}-\theta^*$. From this we derive the excess loss as:
\begin{align}
&L(\theta_k) - L^*
= \frac{1}{2\gamma}\sum_{i=1}^{\infty}\lambda_i\langle u_i, \theta_k-\theta^*\rangle^2 \label{loss-sum-repr}\\
% &= \frac{1}{2\gamma}\langle \delta, \gamma H(I-\gamma H)^{2k} \delta \rangle \notag \\
&= \sum_{i=1}^{\infty} \lambda_i (1-\lambda_i)^{2k} \frac{\langle\delta,u_i\rangle^2}{2\gamma}
= \int_0^1 (1-\lambda)^{2k}d\sigma(\lambda) \label{loss-int-repr}
\end{align}
using the spectral decomposition of $\gamma H$ with eigenvalues $\lambda_i\in[0,1]$ and eigenvectors $u_i\in{\cal H}$ for all $i\ge1$, where
\begin{equation}\label{spectral-measure}
    d\sigma(\lambda) = \frac{1}{2\gamma}\sum_{i=1}^{\infty} \lambda_i\langle\delta,u_i\rangle^2 {\rm Dirac}(\lambda_i)
\end{equation}
is the associated weighted spectral measure \citep{berthier2019} with ${\rm Dirac}(\lambda_i)$ being the Dirac measure concentrated at $\lambda=\lambda_i$. Without loss of generality, we assume that $\lambda_i$'s are sorted in non-increasing order. Due to space limitations, we defer the extended theoretical analysis to Appendix \ref{apx:theory}.

While we consider a restrictive convex setting, this approximation is supported by empirical studies of loss landscapes. In particular, \citet{pyhessian, hessianperspective} show that beyond early training, the Hessian spectrum is dominated by positive eigenvalues, and the remaining negative eigenvalues have small absolute values.

{\bf Scaling Laws via Spectral Truncations.} To analyze the scaling of the loss with respect to model dimensionality, we model a network of finite width $d$ as the restriction of the parameter space ${\cal H}$ to the first $d$ eigen-directions.
\begin{definition}[Spectral truncation]%[Width-$d$ Model]
The optimization is constrained to the subspace $\mathcal{H}_d = \{ \theta \in \mathcal{H} \mid \langle\theta, u_i\rangle = 0 \text{ for } i > d \}$ of the first $d$ eigenvectors of ${\cal H}$. The dynamics follow an optimization algorithm on the active subspace, initialized at $\theta_0 = 0$ (which belongs to ${\cal H}_d$ for any $d$).
\end{definition}

With such truncation, the excess loss can be decomposed as:
\begin{align}
& L(\theta_k^d) - L^*
= \frac{1}{2\gamma}\sum_{i=1}^{\infty}\lambda_i\langle u_i, \theta_k^d-\theta^*\rangle^2 \notag \\
&= \underbrace{\frac{1}{2\gamma}\sum_{i>d}\lambda_i\langle u_i,\theta^*\rangle^2}_{{\rm approximation\;error}} + \underbrace{\int_0^1 (1-\lambda)^{2k}d\sigma_d(\lambda)}_{\rm optimization\;error}, \label{loss-decomp}
\end{align}
where $\sigma_d$ is the truncated measure of $\sigma$ \eqref{spectral-measure} composed of only the first $d$ terms in the sum. The second term in \eqref{loss-decomp} is the optimization error which depends both on the number of steps $k$ and dimension $d$. Meanwhile, the first term is the approximation error independent from $k$.

We also need some notion of spectral density to derive asymptotic power laws from the loss decomposition \eqref{loss-decomp}.

\begin{definition}[Spectral dimension]\label{def:spectral-measure}
    We say the spectrum $(\lambda_i)_{i\ge1}$ has finite spectral dimension $\omega>0$ if the associated measure satisfies $\sigma((0,u)) \sim \frac{c}{\omega}u^{\omega}$ as $u\to0^+$ for $c>0$.
\end{definition}

This notion of spectral dimension \cite{berthier2019} is weaker than the one used in \cite{bach2025ztransform} for the exact asymptotic analysis. Thus, examples considered in \cite{bach2025ztransform} applies with the same parametrization $c$ and $\omega$. Particularly, the central example is the one with power decays:
\begin{equation}\label{source-capacity-cond}
    \lambda_i = \frac{\gamma L}{i^{\alpha}}, \quad |\langle\delta, u_i \rangle| = \frac{\Delta}{i^{\nicefrac{\beta}{2}}}, \quad \text{for } i\ge1,
\end{equation}
where $L$ and $\Delta$ are positive constants, $\alpha$ and $\beta$ are positive exponents such that $\alpha+\beta>1$ to ensure the loss $L(0)$ is finite at initialization $\theta_0=0$. In this case, $\omega = 1+\frac{\beta-1}{\alpha}$ and $c = \frac{L\Delta^2}{2\alpha(\gamma L)^{\omega}}$ (see Lemma 3 in  \cite{bach2025ztransform}).

Below we present our main theoretical claim, stated for general spectrum and describing the scaling laws for GD.

\begin{theorem}[Theoretical scaling law]\label{thm:sc-law-finite-dim}
    If the measure \eqref{spectral-measure} associated with eigenvalues $(\lambda_i)_{i\ge1}$ admits spectral dimension $\omega>0$, then we have two phases for the loss scaling\footnote{Notation $a_k = \Theta(b_k)$ means the ratio $a_k/b_k$ converges to some positive finite constant as $k\to\infty$.}.
    
    \underline{Phase 1.} If $k\lambda_d\le1$, then the loss scales as a power law:
    \begin{equation}\label{main-sc-law-power}
    L(\theta_k^d) = L^* + \Theta(\lambda_{d}^{\omega}) + \widetilde{\cal O}(k^{-\omega}).
    \end{equation}
    \underline{Phase 2.} Otherwise, if $k\lambda_d>1$, the power law eventually saturates into exponential rate for the optimization term:
    \begin{equation*}\label{main-sc-law-exp1}
    L(\theta_k^d) = L^* + \Theta(\lambda_{d}^{\omega}) + {\cal O}(e^{-2k\lambda_d}).
    \end{equation*}
\end{theorem}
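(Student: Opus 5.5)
We start from the decomposition \eqref{loss-decomp} and treat the two terms separately: the approximation error depends only on $d$, the optimization error depends on both $k$ and $d$. The plan is to show that the approximation error is $\Theta(\lambda_d^{\omega})$ using the spectral dimension assumption, and then to bound the optimization error in each of the two regimes $k\lambda_d\le 1$ and $k\lambda_d>1$.

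\textbf{Approximation term.} Since $\theta_0=0$, we have $\delta=-\theta^*$. Hence
\[
\frac{1}{2\gamma}\sum_{i>d}\lambda_i\langle u_i,\theta^*\rangle^2=\sigma\bigl((0,\lambda_{d+1}]\bigr),
\]
i.e.\ the $\sigma$-mass of the tail eigenvalues, which are exactly those $\le\lambda_{d+1}$ when the $\lambda_i$ are sorted and ties are handled. As $d\to\infty$ we have $\lambda_{d}\to0$, so Definition~\ref{def:spectral-measure} gives $\sigma((0,\lambda_{d+1}])\sim\frac{c}{\omega}\lambda_{d+1}^{\omega}$. To replace $\lambda_{d+1}$ by $\lambda_d$ inside $\Theta(\cdot)$, I need the ratio $\lambda_{d+1}/\lambda_d$ to stay bounded away from $0$. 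This holds for the power-law example \eqref{source-capacity-cond}, where the ratio tends to $1$. For a general spectrum I would either impose this as a mild regularity assumption or state the term as $\Theta(\lambda_{d+1}^\omega)$. This identification of the tail mass, and the edge issue of atoms sitting exactly at $\lambda_d$, is the step I expect to need the most care: the definition only gives asymptotics of $\sigma((0,u))$ along continuous $u$, and evaluating it at the atoms $\lambda_{d+1}$ requires the one-sided limit argument or the ratio assumption.

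\textbf{Optimization term, Phase 1 ($k\lambda_d\le1$).} The truncated measure $\sigma_d$ is supported on $[\lambda_d,1]$ and satisfies $\sigma_d\le\sigma$. Therefore
\[
\int(1-\lambda)^{2k}d\sigma_d\le\int_0^1 e^{-2k\lambda}d\sigma(\lambda).
\]
I would bound the right-hand side by integration by parts against the distribution function $F(u)=\sigma((0,u))$:
\[
\int_0^1 e^{-2k\lambda}dF=e^{-2k}F(1)+2k\int_0^1 e^{-2k u}F(u)\,du .
\]
Using $F(u)\le C u^\omega$ for all $u\in(0,1]$, which follows from the asymptotics together with $F(1)<\infty$, the integral is at most $C\,\Gamma(\omega+1)(2k)^{-\omega}+e^{-2k}F(1)=O(k^{-\omega})$. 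This gives \eqref{main-sc-law-power}. The tilde in $\widetilde{\cal O}$ absorbs the constant and, if I bound more crudely by splitting at $u=\log k/k$, a logarithmic factor. In this regime the $d$-truncation only helps, so an upper bound suffices.

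\textbf{Optimization term, Phase 2 ($k\lambda_d>1$).} On the support of $\sigma_d$ every $\lambda\ge\lambda_d$, so $(1-\lambda)^{2k}\le e^{-2k\lambda}\le e^{-2k\lambda_d}$. This yields the bound $\sigma_d([\lambda_d,1])\,e^{-2k\lambda_d}\le\sigma((0,1])\,e^{-2k\lambda_d}$, and $\sigma((0,1])=L(0)-L^*$ is finite. Combining each optimization bound with the approximation estimate gives both phases of Theorem~\ref{thm:sc-law-finite-dim}.
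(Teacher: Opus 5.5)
Your proof is correct, and its skeleton matches the paper's: the decomposition into ${\cal E}_{\rm approx}(d)+{\cal E}_{\rm opt}(k,d)$, the approximation term read as the tail mass of $\sigma$, and for Phase 2 the bound $(1-\lambda)^{2k}\le e^{-2k\lambda_d}$ on the support of $\sigma_d$.

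The real difference is in Phase 1. The paper splits the integral at $u=\frac{\omega\log k}{2k}$. The part below $u$ is bounded by $\sigma((0,u))={\cal O}(u^{\omega})$, and the part above by $e^{-2ku}\sigma([0,1])=k^{-\omega}\sigma([0,1])$. This costs a genuine $(\log k)^{\omega}$ factor, which is why the statement carries $\widetilde{\cal O}$. Your integration by parts against $F(u)=\sigma((0,u))$, with the global bound $F(u)\le Cu^{\omega}$, gives ${\cal O}(k^{-\omega})$ with no logarithm and an explicit constant $C\,\Gamma(\omega+1)2^{-\omega}$. Running the same computation with $F(u)\sim\frac{c}{\omega}u^{\omega}$ is a standard Abelian argument. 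It gives $\int_0^1 e^{-2k\lambda}d\sigma(\lambda)\sim c\,\Gamma(\omega)(2k)^{-\omega}$, which is exactly the paper's constant $C_2$ in the power-law example. So your route is slightly stronger at no extra cost. In both arguments the Phase 1 bound holds for every $d$: the paper uses $k\lambda_d\le1$ only through the trivial $e^{-2k\lambda_d}\le1$.

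On the approximation term, the extra regularity assumption you contemplate is not needed. The detour through $\lambda_{d+1}$ is what created the difficulty. Because the eigenvalues are sorted, and atoms at $0$ carry no mass,
$\{i:0<\lambda_i<\lambda_d\}\subseteq\{i:i>d\}\subseteq\{i:\lambda_i\le\lambda_d\}$.
So the tail mass lies between $\sigma((0,\lambda_d))$ and $\sigma((0,\lambda_d])$.

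\begin{itemize}
\item The lower bound is $\sim\frac{c}{\omega}\lambda_d^{\omega}$ directly from the definition at $u=\lambda_d$.
\item The upper bound has the same asymptotics by letting $u\downarrow\lambda_d$ (continuity from above of the finite measure $\sigma$). This is precisely your one-sided limit argument.
\end{itemize}

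This is what the paper does, more tersely, by writing ${\cal E}_{\rm approx}(d)=\sigma((0,\lambda_d))$ (implicitly for distinct eigenvalues). As a by-product, whenever $\lambda_{d+1}<\lambda_d$ one has $\sigma((0,\lambda_{d+1}])=\sigma((0,\lambda_d))$. Hence the spectral-dimension hypothesis itself forces $\lambda_{d+1}/\lambda_d\to1$, and the ratio condition you wanted to impose is automatic.
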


The obtained theoretical scaling laws \eqref{main-sc-law-power} clearly resemble the empirical scaling law \eqref{eq:classic-law}. The left hand side $L(\theta_k^d)$ is the loss achieved with a model with $d$ active parameters after $k$ optimization steps. The first term $L^*$ on the right hand side corresponds to the irreducible error $E$ in the empirical law. The approximation error $\Theta(\lambda_d^{\omega})$ corresponds to the parameter-efficiency term $\frac{A}{N^{\alpha}}$. Note that the scaling of this term largely depends on the spectrum of the problem and does not have to scale polynomially in general. The exact scaling of this term is $\frac{c}{\omega}\lambda_d^{\omega}$ (see Definition \ref{def:spectral-measure}). Finally, the optimization term $\widetilde{\cal O}(k^{-\omega})$ or ${\cal O}(e^{-2k\lambda_d})$ corresponds to the data-efficiency term $\frac{B}{D^{\beta}}$.
The latter phase of the law is the late-stage of the optimization where the number of gradient descent steps hits the spectral barrier of the problem and the loss converges exponentially fast governed by the smallest active eigenvalue $\lambda_d$ of the quadratic objective.

As a corollary, note that the infinite-dimensional result can be stated by letting $d\to\infty$. Then the term $\Theta(\lambda_d^{\omega})$ vanishes and the saturation phase $k\lambda_d>1$ is never achieved, leaving us $L^* + \widetilde{\cal O}(k^{-\omega})$ for the loss. Up to constants and $\log$-factors, our asymptotic bound \eqref{main-sc-law-power} matches the exact asymptotics of Proposition 1 by \cite{bach2025ztransform}. Furthermore, the bound can be made non-asymptotic by revealing all hidden terms in $\widetilde{\cal O}$ under the same weaker notion of spectral dimension.

{\bf Exact Asymptotics with Power Decays.} In order to derive exact asymptotics for the scaling law, specifically for the optimization error, we restrict the spectrum and the target signal to obey the standard polynomial decays \eqref{source-capacity-cond}. With this spectral conditions we have the following two phases:

\underline{\em Phase 1.} If $k\ll d^{\alpha}$, then the loss scales as a power law:
\begin{equation}\label{asym-sc-law-power}
L(\theta_k^d) \sim L^* + \frac{C_1}{d^{\alpha+\beta-1}} + \frac{C_2}{k^{\omega}}.
\end{equation}
\underline{\em Phase 2.} If $k\gg d^{\alpha}$, then the loss scales as follows:
\begin{equation*}\label{main-sc-law-exp}
L(\theta_k^d) \sim L^* + \frac{C_1}{d^{\alpha+\beta-1}} + \frac{C_3}{d^{\beta-1} k} \exp\left(-2k\frac{\gamma L}{d^{\alpha}}\right),
\end{equation*}
where $\omega = 1 + \frac{\beta-1}{\alpha}$ and constants are defined below:
$$
\textstyle
C_1 = \frac{L\Delta^2}{2(\alpha+\beta-1)},\;
C_2 = \frac{L \Delta^2 \Gamma(\omega)}{2\alpha(2\gamma L)^{\omega}},\;
C_3 = \frac{L \Delta^2}{4\alpha\gamma L}.
$$

{\bf Optimizer-specific Parametrization.} To bridge the empirical law \eqref{eq:law-with-rhos} with the theoretical law \eqref{asym-sc-law-power} (pre-saturation phase), we need to address the optimizer-specific parametrization of the laws. In the theoretical law \eqref{asym-sc-law-power}, the optimizer is reflected as the exponent of the ``data'' term $\frac{C_2}{k^{\omega}}$: the exponent $\omega$ corresponds to the standard gradient descent, and for the (Nesterov) accelerated gradient descent, the exponent becomes $\omega+1$. Meanwhile, in the empirical law \eqref{eq:law-with-rhos}, we have optimizer-specific coefficient $\rho_D$, keeping the exponent $\beta$ fixed relative to some reference optimizer.

To close this modeling gap, we note that empirical and theoretical losses should not be compared directly due to potential scale mismatch. For this reason, we assume that for each term of the law, there exists a non-linear transformation that maps the loss from the theoretical law to the empirical law scale. In particular, if $T_D$ is the transformation for the ``data'' term and the optimization steps $k$ are treated as the number of tokens $D$, then $T_D(\frac{C_2}{D^{\omega}}) = \frac{B}{(D\rho_D)^{\beta}}$.

As we are interested in asymptotic relations with large $D$, we use an approximation $T_D(u)\sim C_Du^{p_D}$ as $u\to0^+$ for some positive exponent $p_D$ and constant $C_D$. Hence,
$
T_D\left( \frac{C_2}{D^{\omega}}\right) \sim C_D\left( \frac{C_2}{D^{\omega}}\right)^{p_D} = \frac{B}{(D \cdot \rho_D)^\beta}.
$
Matching the powers of $D$, we get $p_D = \nicefrac{\beta}{\omega}$, from which we conclude $\rho_D = (B/C_D)^{\nicefrac{1}{\beta}}C_2^{-\nicefrac{1}{\omega}}$. Thus, the optimizer specific exponent $\omega$ in the theoretical law indeed appears in the coefficient $\rho_D$ in the empirical law.

\section{Conclusion}

We asked whether an optimizer choice can be included into empirical scaling laws for LLM pretraining in a way that is both stable and interpretable. We found that fitting separate Chinchilla-form laws per optimizer leads to poorly identified and unreliable parametrization. 
To address this, we introduced a shared-exponent scaling law that keeps the Chinchilla form but models optimizer effects as multiplicative rescaling of model size and data. Concretely, our law is sharing parameters $(A, \alpha, B, \beta, E)$ across optimizers with optimizer-specific model- and data efficiency $\rho_N, \rho_D$ respectively. This parameterization is substantially more stable and yields an interpretable summary of optimizer differences, which can be used directly for compute planning. Further, we also provide theoretical justification for Chinchilla-type scaling laws.

\section*{Acknowledgements}

We thank NVIDIA and Google corporation for their grants, which supported part of this research. Alexandra Volkova was supported in part by the BILAI Cluster of Excellence program. 

Mher Safaryan was supported by Research England under the Expanding Excellence in England (E3) funding stream, which was awarded to MARS: Mathematics for AI in Real-world Systems in the School of Mathematical Sciences at Lancaster University.

We would like to thank our contacts at Datacrunch/Verda, Paul Chang and Antonio Dominguez, for hardware support that was essential to this project. This research was supported by the Scientific Service Units (SSU) of IST Austria through resources provided by Scientific Computing (SciComp). This work was supported under project ID 40 as part of the Swiss AI Initiative, through a grant from the ETH Domain and computational resources provided by the Swiss National Supercomputing Centre (CSCS) under the Alps infrastructure.

\section*{Impact Statement}

This paper presents work whose goal is to advance the field of Machine Learning. There are many potential societal consequences of our work, none of which we feel must be specifically highlighted here.

\bibliography{references}
\bibliographystyle{icml2026}

%%%%%%%%%%%%%%%%%%%%%%%%%%%%%%%%%%%%%%%%%%%%%%%%%%%%%%%%%%%%%%%%%%%%%%%%%%%%%%%
%%%%%%%%%%%%%%%%%%%%%%%%%%%%%%%%%%%%%%%%%%%%%%%%%%%%%%%%%%%%%%%%%%%%%%%%%%%%%%%
% APPENDIX
%%%%%%%%%%%%%%%%%%%%%%%%%%%%%%%%%%%%%%%%%%%%%%%%%%%%%%%%%%%%%%%%%%%%%%%%%%%%%%%
%%%%%%%%%%%%%%%%%%%%%%%%%%%%%%%%%%%%%%%%%%%%%%%%%%%%%%%%%%%%%%%%%%%%%%%%%%%%%%%
\newpage
\appendix
\onecolumn
\section{Model configurations}
\label{apx:model-configs}
Tables \ref{tab:olmo_model_configs} and \ref{tab:llama_model_configs} summarize the OLMo and LLaMa model configurations used in our experiments, respectively. For each nominal size label, we report the exact total parameter count used in the scaling-law fits (including embeddings), along with the number of transformer blocks and attention heads (with head dimension fixed to 128). 

\begin{table}[h]
\centering
\caption{OLMo model configurations used in our experiments. ``Model size (M)'' denotes the nominal model size label, while ``Total params'' reports the exact total parameter count used in our fits (including embeddings).}
\label{tab:olmo_model_configs}
\small
\begin{tabular}{c c c c}
\toprule
\textbf{Model size, M} & \textbf{Total parameter count} & \textbf{\# Blocks} & \textbf{\# Heads} \\
\midrule
  50   &   51 754 764   &  2 &  6 \\
 140   &  143 701 686   &  6 &  9 \\
 250   &  254 519 892   &  7 & 12 \\
 350   &  355 195 524   & 11 & 12 \\
 500   &  525 792 972   & 17 & 12 \\
 760   &  766 454 655   & 17 & 15 \\
1500   & 1 533 199 554  & 25 & 18 \\
\bottomrule
\end{tabular}
\end{table}

\begin{table}[h]
\caption{LLaMA model configurations used in our experiments. ``Model size (M)'' denotes the nominal model size label, while ``Total params'' reports the exact total parameter count used in our fits.}
\label{tab:llama_model_configs}
\centering
\small
\setlength{\tabcolsep}{8pt}
\begin{tabular}{c c c c}
\toprule
\textbf{Model size, M} & \textbf{Total parameter count} & \textbf{\# Blocks} & \textbf{\# Heads} \\
\midrule
  50   &   46 708 736   &  5 &  4 \\
  70   &   71 494 400   &  6 &  5 \\
 180   &  185 698 304   &  8 &  8 \\
 400   &  378 933 760   & 12 & 10 \\
 600   &  605 721 600   & 14 & 12 \\
1400   & 1 445 187 584  & 20 & 16 \\
\bottomrule
\end{tabular}
\end{table}

\section{Scaling Behavior}
Figure~\ref{fig:scaling_olmo} visualizes the empirical scaling behavior across optimizers. We plot validation loss as a function of model size $N$ at fixed token-to-parameter ratios $D/N$, overlaying the fitted scaling curves.

\begin{figure}[b]
    \centering
    \includegraphics[width=0.95\linewidth]{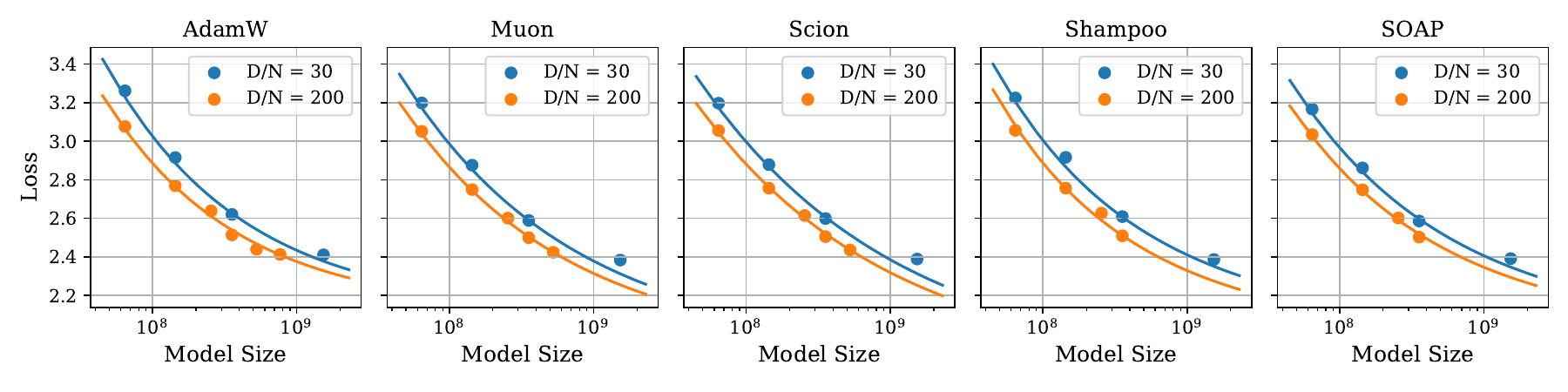}
    \caption{Loss as a function of model size for OLMo family models for token-to-parameter ratios 30 and 200. Data points are measured runs, lines are best-fit scaling curves.}
    \label{fig:scaling_olmo}
\end{figure}

%%%%%%%%%%%%%%%%%%%%%%%%%%%%%%%%%%%%%%%%%%%%%%%%%%%%%%%%%%%%%%%%%%%%%%%%%%%%%%%
%%%%%%%%%%%%%%%%%%%%%%%%%%%%%%%%%%%%%%%%%%%%%%%%%%%%%%%%%%%%%%%%%%%%%%%%%%%%%%%
\section{Deferred Proofs with Extended Theoretical Analysis}\label{apx:theory}

Following \citet{bach2025ztransform}, we consider minimization of the quadratic loss $L(\theta) = L^* + \frac{1}{2}\langle \theta-\theta^*, H(\theta-\theta^*) \rangle$ for some compact positive semidefinite operator $H : {\cal H} \to {\cal H}$ on a separable Hilbert space of infinite dimension. Applying Gradient Descent (GD) with some step-size $\gamma>0$ to the quadratic objective leads iteration of the form
\begin{align*}
\theta_k
&= \theta_{k-1} - \gamma\nabla L(\theta_{k-1}) \\
&= \theta_{k-1} - \gamma H(\theta_{k-1}-\theta^*) \\
&= \theta^* + (I-\gamma H)(\theta_{k-1} - \theta^*) \\
&= \theta^* + (I-\gamma H)^k\delta,
\end{align*}
where $\delta = \theta_{0}-\theta^*$. From this we derive the excess loss as:
\begin{align}
L(\theta_k) - L^*
&= \frac{1}{2\gamma}\sum_{i=1}^{\infty}\lambda_i\langle u_i, \theta_k-\theta^*\rangle^2 \label{apx:loss-sum-repr}\\
% &= \frac{1}{2\gamma}\langle \delta, \gamma H(I-\gamma H)^{2k} \delta \rangle \notag \\
&= \sum_{i=1}^{\infty} \lambda_i (1-\lambda_i)^{2k} \frac{\langle\delta,u_i\rangle^2}{2\gamma}
= \int_0^1 (1-\lambda)^{2k}d\sigma(\lambda) \label{apx:loss-int-repr}
\end{align}
using the spectral decomposition of $\gamma H$ with eigenvalues $\lambda_i\in[0,1]$ and eigenvectors $u_i\in{\cal H}$ for all $i\ge1$, where
\begin{equation}\label{apx:spectral-measure}
    d\sigma(\lambda) = \frac{1}{2\gamma}\sum_{i=1}^{\infty} \lambda_i\langle\delta,u_i\rangle^2 {\rm Dirac}(\lambda_i)
\end{equation}
is the associated weighted spectral measure \citep{berthier2019} with ${\rm Dirac}(\lambda_i)$ being the Dirac measure concentrated at $\lambda=\lambda_i$. Without loss of generality, we assume that $\lambda_i$'s are sorted in non-increasing order.

{\bf Scaling laws in infinite-dimensions.} We first revisit infinite-dimensional quadratic problem and derive power laws with simpler arguments and weaker spectral condition. The analysis developed here provides asymptotic bounds with the same exponents as in \citet{bach2025ztransform} for the exact asymptotics. The purpose is to build intuition with simpler analysis and later extend it to the finite-dimensional setting.

\begin{definition}[Spectral dimension]
    We say the spectrum $(\lambda_i)_{i\ge1}$ has finite spectral dimension $\omega>0$ if the associated measure satisfies $\sigma((0,u)) \sim \frac{c}{\omega}u^{\omega}$ as $u\to0^+$ for $c>0$.
\end{definition}

This notion of spectral dimension \cite{berthier2019} is weaker than the one used in \cite{bach2025ztransform} for the exact asymptotic analysis. Thus, examples considered in \cite{bach2025ztransform} applies with the same parametrization $c$ and $\omega$. Particularly, the central example is the measure with power decays
\begin{equation}\label{apx:source-capacity-cond}
    \lambda_i = \frac{\gamma L}{i^{\alpha}}, \quad |\langle\delta, u_i \rangle| = \frac{\Delta}{i^{\nicefrac{\beta}{2}}}, \quad \text{for } i\ge1,
\end{equation}
where $L$ and $\Delta$ are positive constants, $\alpha$ and $\beta$ are positive exponents such that $\alpha+\beta>1$ to ensure the loss $L(0)$ is finite at initialization $\theta_0=0$. In this case, $\omega = 1+\frac{\beta-1}{\alpha}$ and $c = \frac{L\Delta^2}{2\alpha(\gamma L)^{\omega}}$ (see Lemma 3 in  \cite{bach2025ztransform}).

\begin{theorem}\label{apx:thm:sc-law-inf-dim}
    If $\sigma$ admits spectral dimension $\omega>0$, then
    \begin{equation}\label{apx:sc-law-inf-dim}
    L(\theta_k) = L^* + \widetilde{\cal O}(k^{-\omega}), \quad\text{as}\quad k\to\infty,
    \end{equation}
    where $\widetilde{\cal O}$ hides constants and $\log$-factors.
\end{theorem}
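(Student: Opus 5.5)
We start from the integral representation \eqref{apx:loss-int-repr}, $L(\theta_k)-L^* = \int_0^1 (1-\lambda)^{2k}\,d\sigma(\lambda)$, and split the interval at a threshold $u_k\in(0,1)$ to be chosen as a function of $k$. This gives a low-eigenvalue part over $(0,u_k)$ and a high-eigenvalue part over $[u_k,1]$. The point $\lambda=0$ is irrelevant, since $\sigma$ puts no mass there: each atom carries the weight $\lambda_i$.

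On the low part we bound $(1-\lambda)^{2k}\le 1$, so the contribution is at most $\sigma((0,u_k))$. By the spectral dimension assumption $\sigma((0,u))\sim \frac{c}{\omega}u^\omega$, this is at most $\frac{2c}{\omega}u_k^\omega$ once $k$ is large enough that $u_k$ is small. On the high part we use $(1-\lambda)^{2k}\le e^{-2k\lambda}\le e^{-2ku_k}$ and bound the remaining mass by the total mass $\sigma((0,1])=L(\theta_0)-L^*<\infty$, which is finite because $\sigma((0,u))$ has a finite limit (it is a finite measure near $0$, and the assumption implicitly presumes finite initial loss). So
\begin{equation*}
L(\theta_k)-L^* \le \tfrac{2c}{\omega}u_k^{\omega} + \sigma((0,1])\,e^{-2ku_k}.
\end{equation*}

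We then choose $u_k = \frac{\omega\log k}{2k}$, which tends to $0$ so the asymptotic regime applies. With this choice $e^{-2ku_k}=k^{-\omega}$ and $u_k^\omega = (\omega/2)^\omega k^{-\omega}(\log k)^\omega$. Altogether $L(\theta_k)-L^* = \mathcal{O}\big(k^{-\omega}(\log k)^{\omega}\big)=\widetilde{\cal O}(k^{-\omega})$, which is \eqref{apx:sc-law-inf-dim}.

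The only delicate step is balancing the threshold, since the crude bound $e^{-2ku}$ on the high part forces a logarithmic loss. If a log-free bound were wanted, one could instead integrate by parts, $\int (1-\lambda)^{2k}d\sigma = \int_0^1 2k(1-\lambda)^{2k-1}\sigma((0,\lambda))\,d\lambda$ plus a boundary term. Substituting $\sigma((0,\lambda))\le C\lambda^\omega$ near $0$ then gives a Beta-function bound of order $\Gamma(\omega+1)(2k)^{-\omega}$. The statement only requires $\widetilde{\cal O}$, so the simpler splitting argument suffices. This splitting argument also extends directly to the truncated measure $\sigma_d$ used in Theorem~\ref{thm:sc-law-finite-dim}.
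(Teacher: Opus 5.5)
Your argument is correct and is essentially the paper's proof: split $\int_0^1(1-\lambda)^{2k}\,d\sigma(\lambda)$ at $u$, bound the low part by $\sigma((0,u))=\mathcal{O}(u^{\omega})$ and the high part by $\sigma([0,1])\,e^{-2ku}$, then take $u=\frac{\omega\log k}{2k}$. Your integration-by-parts aside is also valid: the boundary terms vanish and the Beta integral gives a log-free $\mathcal{O}(k^{-\omega})$. That is a small sharpening the paper does not pursue.
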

\begin{proof}
    Given the integral representation \eqref{apx:loss-int-repr} of the excess loss, for any $u\in(0,1)$ we split the integral into parts:
    \begin{align*}
        L(\theta_k) - L^*
        &= \int_0^u (1-\lambda)^{2k}d\sigma(\lambda)
         + \int_u^1 (1-\lambda)^{2k}d\sigma(\lambda).
    \end{align*}
    For the first integral, we apply the inequality $(1-\lambda)^{2k}\le1$ and the spectral condition of $\sigma(\lambda)$, to show that is of order ${\cal O}(u^{\omega})$. For the second integral, we apply the inequality $(1-\lambda)^{2k}\le\exp(-2ku)$ for any $\lambda\in(u,1)$ and boundedness of the spectral measure (i.e., $\sigma([0,1])$ is finite), to show that is of order ${\cal O}(\exp(-2ku))$. Therefore, we have
    \begin{equation*}
        L(\theta_k) = L^* + {\cal O}(u^{\omega} + \exp(-2ku)),
    \end{equation*}
    for $u\in(0,1)$ such that $u\to0$ as $k\to\infty$. We are left to choose $u = \frac{\omega\log k}{2k}$ and note that the excess loss is of order
    \begin{align*}
        u^{\omega} + \exp(-2ku)
        &= \left(\frac{\omega\log k}{2k}\right)^{\omega} + \exp(-\omega\log k) \\
        &= \left(\frac{\omega\log k}{2k}\right)^{\omega} + \frac{1}{k^{\omega}}
        = \widetilde{\cal O}\left(\frac{1}{k^{\omega}}\right),
    \end{align*}
    which completes the proof.
\end{proof}
Up to constants and $\log$-factors, our asymptotic bound \eqref{apx:sc-law-inf-dim} matches the exact asymptotics of Proposition 1 by \cite{bach2025ztransform}. Furthermore, the bound can be made non-asymptotic by revealing all hidden terms in $\widetilde{\cal O}$ under the same weaker notion of spectral dimension.

{\bf Scaling laws via spectral truncations.} We now extend the previous analysis to finite-dimensional setting and identity the scaling with respect to dimension. We model a network of finite width $d$ as the restriction of the parameter space to the first $d$ eigen-directions.
\begin{definition}[Width-$d$ Model]
The optimization is constrained to the subspace $\mathcal{H}_d = \{ \theta \in \mathcal{H} \mid \langle\theta, u_i\rangle = 0 \text{ for } i > d \}$ spanning the first $d$ eigenvectors of ${\cal H}$. The dynamics follow an optimization algorithm on the active subspace, initialized at $\theta_0 = 0$ (which belongs to ${\cal H}_d$ for any $d$).
\end{definition}

With such truncation, the excess loss can be decomposed as:
\begin{align}
L(\theta_k^d) - L^*
&= \frac{1}{2\gamma}\sum_{i=1}^{\infty}\lambda_i\langle u_i, \theta_k^d-\theta^*\rangle^2 \notag \\
&= \frac{1}{2\gamma}\sum_{i>d}\lambda_i\langle u_i,\theta^*\rangle^2 + \frac{1}{2\gamma}\sum_{i=1}^{d}\lambda_i\langle u_i, \theta_k^d-\theta^*\rangle^2 \notag \\
&= \frac{1}{2\gamma}\sum_{i>d}\lambda_i\langle u_i,\theta^*\rangle^2 + \frac{1}{2\gamma}\sum_{i=1}^{d} \lambda_i (1-\lambda_i)^{2k} \langle\delta,u_i\rangle^2 \notag \\
&= \underbrace{\frac{1}{2\gamma}\sum_{i>d}\lambda_i\langle u_i,\theta^*\rangle^2}_{{\cal E}_{\rm approx}(d)} + \underbrace{\int_0^1 (1-\lambda)^{2k}d\sigma_d(\lambda)}_{{\cal E}_{\rm opt}(k,d)}, \label{apx:loss-decomp}
\end{align}
where $\sigma_d$ is the truncated measure of $\sigma$ \eqref{apx:spectral-measure} composed of $d$ terms in the sum. The second term ${\cal E}_{\rm opt}(k,d)$ in \eqref{apx:loss-decomp} is the optimization error which depends both on the number of steps $k$ and dimension $d$. Meanwhile, the first term ${\cal E}_{\rm approx}(d)$ is the approximation error independent from $k$.

Using the notion of spectral dimension and the fact that eigenvalues are sorted, we get the following asymptotic expression for the approximation error as $d\to\infty$:
\begin{align*}
    &{\cal E}_{\rm approx}(d)
    = \frac{1}{2\gamma}\sum_{\lambda_i<\lambda_d}\lambda_i\langle u_i,\theta^*\rangle^2
    = \sigma((0,\lambda_d)) \propto \lambda_d^{\omega}.
\end{align*}

For the optimization error ${\cal E}_{\rm opt}(k,d)$, note that it differs from the integral representation \eqref{apx:loss-int-repr} only by the measure $\sigma_d(\lambda)$ which is the truncation version. Because of the truncation, there is no active eigenvalue in the interval $(0,\lambda_d)$ implying $\sigma_d((0,\lambda_d)) = 0$. Hence, in the integral representation of ${\cal E}_{\rm opt}(k,d)$, the support $[0,1]$ can be replaced by $[\lambda_d, 1]$. As we analyzed in the infinite-dimensional case, we split the integral with some intermediate value $u\in(\lambda_d, 1)$:
\begin{align}
    &{\cal E}_{\rm opt}(k,d)
    = \int_{\lambda_d}^1 (1-\lambda)^{2k}d\sigma_d(\lambda) \notag \\
    &= \int_{\lambda_d}^u (1-\lambda)^{2k}d\sigma(\lambda) + \int_{u}^1 (1-\lambda)^{2k}d\sigma(\lambda) \notag \\
    &= {\cal O}\left( e^{-2k\lambda_d}\sigma((\lambda_d,u)) + e^{-2ku} \right). \label{apx:opt-error}
    % &= {\cal O}\left( e^{-2k\lambda_d}(u^{\omega} - \lambda_d^{\omega}) + e^{-2ku} \right) \\
\end{align}
To make the asymptotics of \eqref{apx:opt-error} explicit, we distinguish two phases depending on the relative sizes of $k$ and $d$.

{\bf Phase 1: Under-training or power law ($k\lambda_d\le1$).} This phase mimics the infinite-dimensional setting we considered before. Indeed, using relation $k\lambda_d\le1$, we can further simplify the asymptotic bound \eqref{apx:opt-error} with the same value $u = \frac{\omega\log k}{2k}$ as in the previous analysis:
\begin{align}
    {\cal E}_{\rm opt}(k,d)
    &= {\cal O}\left( e^{-2k\lambda_d}\sigma((\lambda_d,u)) + e^{-2ku} \right) \notag \\
    &= {\cal O}\left( \sigma((\lambda_d,u)) + e^{-2ku} \right) \\
    &= {\cal O}\left( u^{\omega} + e^{-2ku} - \lambda_d^{\omega} \right) \notag \\
    &= {\cal O}\left( \left(\frac{\omega\log k}{2k}\right)^{\omega} + \frac{1}{k^{\omega}} - \lambda_d^{\omega} \right)
    = \widetilde{\cal O}\left(\frac{1}{k^{\omega}} \right) \notag.
\end{align}

{\bf Phase 2: Over-training or saturation ($k\lambda_d>1$).} This is the late-stage of the optimization where the number of gradient descent steps hits the spectral barrier of the problem and the loss converges exponentially fast governed by the smallest positive eigenvalue of the quadratic objective. Indeed, choosing $u = \lambda_d$ (or, equivalently, not splitting the integral representation in the first place) gives us $e^{-2k\lambda_d}$ rate, which makes sense only in this regime $k\lambda_d>1$.

\begin{theorem}\label{apx:thm:sc-law-finite-dim}
    If $\sigma$ admits spectral dimension $\omega>0$, then\footnote{Notation $a_k = \Theta(b_k)$ means the ratio $a_k/b_k$ converges to some positive finite constant as $k\to\infty$.}
    \begin{equation}\label{apx:sc-law-theory-finite}
    L(\theta_k^d) = L^* + \Theta(\lambda_{d}^{\omega}) +
    \begin{cases}
        \widetilde{\cal O}(k^{-\omega}) & \text{if}\quad k\lambda_d\le1, \\
        {\cal O}(e^{-2k\lambda_d}) & \text{if}\quad k\lambda_d>1. \\
    \end{cases}
    \end{equation}
\end{theorem}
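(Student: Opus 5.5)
The proof starts from the decomposition \eqref{apx:loss-decomp}, $L(\theta_k^d)-L^* = {\cal E}_{\rm approx}(d) + {\cal E}_{\rm opt}(k,d)$, and treats the two terms separately.

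\textbf{Approximation term.} Since $\theta_0=0$, we have $\delta=-\theta^*$. Hence ${\cal E}_{\rm approx}(d)=\frac{1}{2\gamma}\sum_{i>d}\lambda_i\langle\delta,u_i\rangle^2$, which is exactly the $\sigma$-mass carried by the eigenvalues with index $i>d$. Because the $\lambda_i$ are sorted, these eigenvalues all lie in $(0,\lambda_d]$. Ties at $\lambda_d$ and zero eigenvalues need care: zero eigenvalues carry no $\sigma$-mass, and ties can be absorbed by passing to the distinct-value truncation. Modulo these, ${\cal E}_{\rm approx}(d)=\sigma((0,\lambda_d))$. The spectral-dimension assumption then gives ${\cal E}_{\rm approx}(d)\sim\frac{c}{\omega}\lambda_d^{\omega}$ as $d\to\infty$ (so $\lambda_d\to0$ by compactness). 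This is the $\Theta(\lambda_d^\omega)$ term, with an explicit constant.

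\textbf{Optimization term.} The truncated measure $\sigma_d$ has no mass below $\lambda_d$, so ${\cal E}_{\rm opt}(k,d)=\int_{[\lambda_d,1]}(1-\lambda)^{2k}\,d\sigma_d(\lambda)$. I use $(1-\lambda)^{2k}\le e^{-2k\lambda}$ throughout, and $\sigma_d\le\sigma$ with $\sigma([0,1])<\infty$.

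\emph{Phase 2} ($k\lambda_d>1$): bound $e^{-2k\lambda}\le e^{-2k\lambda_d}$ on the whole support. This gives ${\cal E}_{\rm opt}\le\sigma([0,1])\,e^{-2k\lambda_d}$ directly.

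\emph{Phase 1} ($k\lambda_d\le1$): split at $u=\frac{\omega\log k}{2k}$.
\begin{itemize}
\item If $u\le\lambda_d$, the whole integral is at most $\sigma([0,1])e^{-2k\lambda_d}\le\sigma([0,1])e^{-2ku}=\sigma([0,1])\,k^{-\omega}$.
\item Otherwise, on $[\lambda_d,u)$ bound the integrand by $1$ and the mass by $\sigma((0,u))\le C u^{\omega}$. The bound $\sigma((0,u))\le Cu^\omega$ holds for all small $u$, hence for large $k$, by the spectral-dimension asymptotics. On $[u,1]$ bound the integrand by $e^{-2ku}=k^{-\omega}$ times the total mass.
\end{itemize}
Summing gives ${\cal E}_{\rm opt}=O\bigl((\omega\log k/(2k))^{\omega}+k^{-\omega}\bigr)=\widetilde{\cal O}(k^{-\omega})$, mirroring the proof of Theorem~\ref{apx:thm:sc-law-inf-dim}. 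The constants are uniform in $d$ because $\sigma_d\le\sigma$.

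\textbf{Main obstacle.} The delicate step is making the asymptotic statement rigorous when both $k$ and $d$ vary. The $\Theta$ is in $d$ while the $\widetilde{\cal O}$ is in $k$. I would state the optimization bound uniformly over $d$, which the domination $\sigma_d\le\sigma$ provides, and use the spectral-dimension condition only through the one-sided bound $\sigma((0,u))\le Cu^\omega$ for $u\le u_0$. For the few $k$ where $u>u_0$, the bound holds trivially by enlarging constants. The tie and zero-eigenvalue bookkeeping in identifying ${\cal E}_{\rm approx}(d)$ with $\sigma((0,\lambda_d))$ is the other point needing a careful remark.
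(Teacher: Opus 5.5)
Your proposal is correct and follows the paper's proof. Both use the approximation/optimization decomposition, identify ${\cal E}_{\rm approx}(d)$ with $\sigma((0,\lambda_d))\sim\frac{c}{\omega}\lambda_d^{\omega}$, bound the optimization term in Phase 1 by splitting at $u=\frac{\omega\log k}{2k}$, and in Phase 2 use the single bound $e^{-2k\lambda_d}\sigma([0,1])$. Your refinements only tighten the same argument: bounding $\sigma((\lambda_d,u))\le\sigma((0,u))$ instead of the paper's informal $u^{\omega}-\lambda_d^{\omega}$ inside an ${\cal O}$, treating ties and the case $u\le\lambda_d$, and getting uniformity in $d$ from $\sigma_d\le\sigma$.
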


The obtained law directly maps to the empirical scaling law \eqref{eq:classic-law}. The left hand side $L(\theta_k^d)$ is the loss achieved with a model with $d$ active parameters after $k$ optimization steps. The first term $L^*$ on the right hand side corresponds to the irreducible error $E$ in the empirical law \eqref{eq:classic-law}. The approximation error $\Theta(\lambda_d^{\omega})$ corresponds to the parameter-efficiency term $\frac{A}{N^{\alpha}}$. Note that the scaling of this term largely depends on the spectrum of the problem and does not have to scale polynomially in general. The exact scaling of this term is $\frac{c}{\omega}\lambda_d^{\omega}$. Finally, the optimization term $\widetilde{\cal O}(k^{-\omega})$ or ${\cal O}(e^{-2k\lambda_d})$ corresponds to the data-efficiency term $\frac{B}{D^{\beta}}$. For the under-training regime, we proved a power law relationship with the same exponent (being the spectral dimension of the problem) as in the infinite-dimensional setting.

As a sanity check, notice that the infinite-dimensional result in Theorem \ref{apx:thm:sc-law-inf-dim} is covered by this: letting $d\to\infty$, the term $\Theta(\lambda_d^{\omega})$ vanishes and the phase $k\lambda_d>1$ is never achieved, leaving us $L^* + \widetilde{\cal O}(k^{-\omega})$ for the loss. Also, note that the obtained asymptotic relation for the approximation error is exact, while for the optimization error it is an upper bound.

{\bf Exact asymptotics: polynomial decays for the spectrum
and target signal.} In order to derive exact asymptotics for the scaling law, we restrict the spectrum and the target signal to obey the standard polynomial decays, also known as ``source'' and ``capacity'' conditions \eqref{apx:source-capacity-cond}:

The analysis of Theorem \ref{thm:sc-law-finite-dim} proves that ${\cal E}_{\rm approx}(d) = \sigma((0,\lambda_d))$ for any spectrum. As discussed earlier, for the spectrum \eqref{apx:source-capacity-cond} with power decay, we have $\sigma((0,u))\sim\frac{c}{\omega}u^{\omega}$ with $\omega = 1+\frac{\beta-1}{\alpha}$ and $c = \frac{L\Delta^2}{2\alpha(\gamma L)^{\omega}}$. Therefore, plugging the obtained expressions and the exact form of $\lambda_d$, we get:
$$
{\cal E}_{\rm approx}(d)
% \sim \frac{c}{\omega}\lambda_d^{\omega}
\sim \frac{L\Delta^2}{2\alpha\omega(\gamma L)^{\omega}}\left(\frac{\gamma L}{d^{\alpha}} \right)^{\omega}
=    \frac{L\Delta^2 d^{-(\alpha+\beta-1)}}{2(\alpha+\beta-1)}.
$$
Next, for the optimization error ${\cal E}_{\rm opt}(k,d)$ we have:
\begin{equation*}
    {\cal E}_{\rm opt}(k,d)
    \sim \frac{L \Delta^2}{2\alpha(2\gamma L)^{\omega}} \frac{\Gamma(\omega, 2\gamma Lkd^{-\alpha})}{k^{\omega}}.
\end{equation*}
\begin{proof}
We start from the definition of the optimization error, apply the asymptotic relation for the exponential function $e^x\sim1+x$ as $x\to0$, then we represent the finite sum as its asymptotically equivalent integral expression since boundary terms vanish in the limit $k\to\infty$:
\begin{align}
    {\cal E}_{\rm opt}(k,d)
    &= \frac{1}{2\gamma}\sum_{i=1}^{d} \lambda_i (1-\lambda_i)^{2k} \langle\delta,u_i\rangle^2 \notag \\
    &= \frac{L\Delta^2}{2}\sum_{i=1}^{d} \frac{1}{i^{\alpha+\beta}} \left(1-\frac{\gamma L}{i^{\alpha}}\right)^{2k} \notag \\
    &\sim \frac{L\Delta^2}{2}\sum_{i=1}^{d} \frac{1}{i^{\alpha+\beta}} \exp\left(-2k\gamma L i^{-\alpha}\right) \notag \\
    &\sim \frac{L\Delta^2}{2}\int_{1}^{d} \frac{1}{x^{\alpha+\beta}} \exp\left(-2k\gamma L x^{-\alpha}\right)dx. \label{apx:opt-error-int-term}
\end{align}
Next, we perform change of variables $t = 2 \gamma L k x^{-\alpha}$. From this we get $x = (2 \gamma L k)^{1/\alpha} t^{-1/\alpha}$ and $dx = -\frac{1}{\alpha} (2 \gamma L k)^{1/\alpha} t^{-1/\alpha - 1} dt$. Furthermore, the limits of integration would become $t_{\min} = 2\gamma Lk d^{-\alpha}$ and $t_{\max} = 2 \gamma L k$. Substituting these expression back into \eqref{apx:opt-error-int-term} (the negative sign of $dx$ swaps the order of limits), we have
\begin{equation*}
\frac{L \Delta^2}{2} \int_{t_{\min}}^{t_{\max}} \left[ (2 \gamma L k)^{-\frac{\alpha+\beta}{\alpha}} t^{\frac{\alpha+\beta}{\alpha}} \right] e^{-t} \left[ \frac{1}{\alpha} (2 \gamma L k)^{\frac{1}{\alpha}} t^{-\frac{1}{\alpha} - 1} \right] dt.
\end{equation*}
Combining the exponents of $(2 \gamma L k)$, $\frac{1}{\alpha} - \frac{\alpha+\beta}{\alpha} = \frac{1-\alpha-\beta}{\alpha} = -(1 + \frac{\beta-1}{\alpha}) = -\omega$, and the exponents of $t$, $\frac{\alpha+\beta}{\alpha} - \frac{1}{\alpha} - 1 = \frac{\beta-1}{\alpha} = \omega - 1$, implies
\begin{equation*}
    {\cal E}_{\rm opt}(k,d)
    \sim \frac{L \Delta^2}{2\alpha} (2 \gamma L k)^{-\omega} \int_{t_{\min}}^{t_{\max}} t^{\omega-1} e^{-t} dt.
\end{equation*}
The integral is exactly $\Gamma(\omega, t_{\min}) - \Gamma(\omega, t_{\max})$, where $\Gamma(\omega, \tau) = \int_{\tau}^\infty t^{\omega-1} e^{-t}dt$ is the upper incomplete Gamma function. As $k\to\infty$, we have $t_{\max} = 2\gamma Lk \to \infty$, and the second term vanishes, leaving $\Gamma(\omega, 2\gamma Lkd^{-\alpha})$.
\begin{equation*}
    {\cal E}_{\rm opt}(k,d)
    \sim \frac{L \Delta^2}{2\alpha(2\gamma L)^{\omega}} \frac{\Gamma(\omega, 2\gamma Lkd^{-\alpha})}{k^{\omega}}.
\end{equation*}
\end{proof}

{\bf Phase 1: Under-training or power law phase ($k\ll d^{\alpha}$).} In this case, lower limit $t_{\min}\to0$ and $\Gamma(\omega, 2\gamma Lkd^{-\alpha})\to\Gamma(\omega)$ leading us to the standard power law:
\begin{equation*}
    {\cal E}_{\rm opt}(k,d)
    \sim \frac{L \Delta^2}{2\alpha(\gamma L)^{\omega}} \frac{\Gamma(\omega)}{(2k)^{\omega}}.
\end{equation*}

{\bf Phase 2: Over-training or saturation ($k\gg d^{\alpha}$).} In this case, $t_{\min}\to\infty$. Using $\Gamma(\omega, \tau) \sim \tau^{\omega-1} e^{-\tau}$ as $\tau\to\infty$, the optimization error decays exponentially:
\begin{equation*}
    {\cal E}_{\rm opt}(k,d)
    \sim \frac{L \Delta^2}{4\alpha\gamma L}\frac{1}{d^{\beta-1} k} \exp\left(-2k\frac{\gamma L}{d^{\alpha}}\right).
\end{equation*}
This confirms that width $d$ imposes a spectral barrier; beyond $k \sim d^\alpha$, convergence is governed by the smallest active curvature $\lambda_d \propto d^{-\alpha}$.

%%%%%%%%%%%%%%%%%%%%%%%%%%%%%%%%%%%%%%%%%%%%%%%%%%%%%%%%%%%%%%%%%%%%%%%%%%%%%%%
%%%%%%%%%%%%%%%%%%%%%%%%%%%%%%%%%%%%%%%%%%%%%%%%%%%%%%%%%%%%%%%%%%%%%%%%%%%%%%%

\end{document}